\DeclareMathOperator*{\argmax}{argmax}
\newtheorem{theorem}{Theorem}
\newtheorem{lemma}{Lemma}
\title{Learning Network Dismantling Without Handcrafted Inputs}
\author{
    Haozhe Tian\textsuperscript{\rm 1},
    Pietro Ferraro\textsuperscript{\rm 1},
    Robert Shorten\textsuperscript{\rm 1},
    Mahdi Jalili\textsuperscript{\rm 2},
    Homayoun Hamedmoghadam\textsuperscript{\rm 1}\thanks{Corresponding author: h.hamed@imperial.ac.uk.}
}
\begin{document}

\hyphenpenalty 10000

\maketitle

\begin{abstract}
The application of message-passing Graph Neural Networks has been a breakthrough for important network science problems. However, the competitive performance often relies on using handcrafted structural features as inputs, which increases computational cost and introduces bias into the otherwise purely data-driven network representations. Here, we eliminate the need for handcrafted features by introducing an attention mechanism and utilizing message-iteration profiles, in addition to an effective algorithmic approach to generate a structurally diverse training set of small synthetic networks. Thereby, we build an expressive message-passing framework and use it to efficiently solve the NP-hard problem of Network Dismantling, virtually equivalent to vital node identification, with significant real-world applications. Trained solely on diversified synthetic networks, our proposed model---MIND: Message Iteration Network Dismantler---generalizes to large, unseen real networks with millions of nodes, outperforming state-of-the-art network dismantling methods. Increased efficiency and generalizability of the proposed model can be leveraged beyond dismantling in a range of complex network problems.
\end{abstract}

\begin{links}
    \link{Code}{https://github.com/HaozheTian/MIND-ND}
    \link{Extended version}{https://arxiv.org/pdf/2508.00706}
\end{links}

\section{Introduction}
Network dismantling is the problem of finding the sequence of node removals that most rapidly fragments a network into isolated components~\citep{braunstein2016network, ren2019generalized}. Finding dismantling solutions is equivalent to the identification of vital components of the network system, and has profound real-world applications, such as breaking criminal organizations by arresting the key members~\citep{ribeiro2018dynamical}, stopping epidemics with targeted vaccinations~\citep{kitsak2010identification, cohen2003efficient}, ensuring the resilience of healthcare systems via the key providers  \citep{lo2019quantification}, and preventing wildfires by securing critical locations~\citep{demange2025instantiating}. Figure~\ref{fig:sche} visualizes network dismantling of a real-world social network~\citep{guo2016novel} in action, where strategically removing a mere $7\%$ of nodes effectively breaks it into small components.

\begin{figure}[t]
    \centering
    \includegraphics[width=\linewidth]{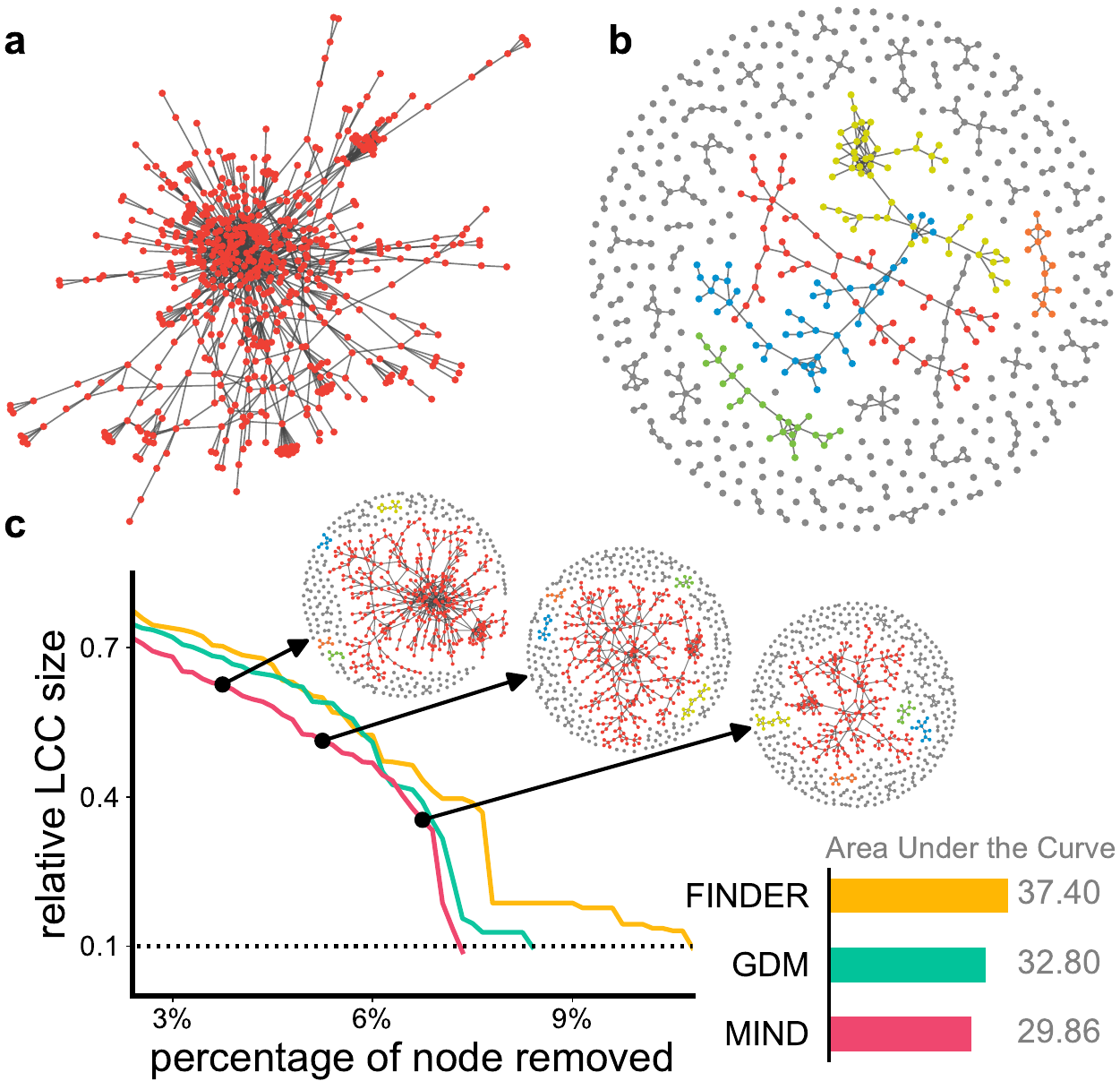}
    \caption{
    (\textbf{a}) The original social network from the FilmTrust project~\citep{guo2016novel} with 610 nodes. 
    (\textbf{b}) The dismantled network by MIND, down to a 10\% relative Largest Connected Component (LCC) size. 
    (\textbf{c}) Relative LCC size versus the fraction of nodes removed, comparing MIND with two state-of-the-art methods. (The 5 largest components are color-coded in network plots.)}
    \label{fig:sche}
\end{figure}

Despite the practical significance, only approximate solutions can be sought for network dismantling, due to the NP-hard nature of the problem~\citep{braunstein2016network}. Yet, the possibility of reaching a universal perception of structural roles, and the challenge of planning along the extreme breadth and depth of the search, have motivated the decades-long quest for better dismantling solutions.
The early solutions use node centrality metrics as heuristics~\citep{freeman1977set,wandelt2018comparative}, with advancements later made by theoretical solutions to more tractable proxy problems, including optimal percolation~\citep{morone2015influence}, graph decycling~\citep{braunstein2016network}, and minimum cut~\citep{ren2019generalized}.
Recent methods use Graph Neural Networks (GNNs) to learn vector representations of nodes through iterative message-passing, which scales linearly with the number of nodes and edges and is parallelizable on GPUs~\citep{velickovic2018graph, hamilton2017inductive}. The well-performing existing methods~\citep{fan2020finding, grassia2021machine} rely on handcrafted inputs to aid the inference of nodes' importance. 

We argue that using handcrafted input features i)~imposes significant computational overhead, especially for large-scale networks, and ii)~biases the learned dismantling strategy toward the predefined features whose effectiveness varies across network families.
To address these limitations, we propose Message Iteration Network Dismantler (MIND)---a model solely based on data-driven geometric learning without manually engineered features. 
While it is well-established that initializing GNNs with structural heuristics improves the performance~\citep{cui2022positional}, MIND achieves competitive performance through pure message-passing. 
Specifically, MIND i)~employs an expressive attention mechanism to replace the non-injective softmax normalization in existing Graph Attention Networks (GATs), and ii)~leverages node embeddings from all message-passing iterations to capture crucial structural information. We demonstrate that the design of MIND enables the estimation of complex structural roles, such as those given by combinatorial centrality and spectral embedding, which are proven essential for network dismantling ~\citep{wandelt2018comparative}, and empirically show that MIND can discover new, data-driven node features that outperform known dismantling heuristics.

MIND learns to identify critical nodes and substructures with Reinforcement Learning (RL) that is trained by dismantling small synthetic networks.
We introduce a training pipeline utilizing degree-preserving edge rewiring to systematically synthesize structurally diverse networks with varying levels of assortativity and modularity. Interactions with these diversified networks significantly enhance the policy’s ability to generalize to complex real-world networks.
Our proposed trained policy scales well to networks with well over 1 million nodes, achieving state-of-the-art dismantling performance using only the raw incidence information, without any handcrafted input.
The key contribution is the introduction of a pure geometric learning framework that can decipher the complex structural roles of network entities with surprising generalizability, resulting in the best performance to date on one of the most challenging network problems.

\section{Related Works}

\subsubsection{Reinforcement Learning}
RL solves combinatorial optimization problems by learning a policy that maximizes the expected cumulative reward of action sequences in the given state space~\citep{sutton1998reinforcement, konda1999actor}.  
This approach enables the optimization of non-differentiable objectives via sequential decision-making; e.g., reducing network connectivity by iterative node removals, which is infeasible through exhaustive search, and difficult for heuristic methods that do not generalize to the infinite possible network configurations. This necessitates learning from experience, where RL, especially with recent advances in Deep RL~\citep{mnih2013playing, khalil2017learning, haarnoja2018soft}, is a particularly effective solution.

\subsubsection{Graph Neural Networks}
GNNs learn representations of network structures through node embeddings that are iteratively refined by aggregating the messages in each node's neighborhood ~\citep{kipf2016semi, hamilton2017inductive, brody2022how}. A shared, learnable function transforms these messages, enabling generalization to networks of arbitrary sizes. Theoretically, certain GNN architectures can distinguish almost all non-isomorphic networks even without initial node features, provided the learned function is a universal approximator~\citep{kipf2016semi, dai2016discriminative, xu2018how, morris2019weisfeiler}. 
However, in practice, initializing node embeddings with constant or random values often degrades the performance compared to using handcrafted features~\citep{cui2022positional}. The latter facilitates convergence~\citep{oono2020graph}, but introduces bias to the learned embeddings (see the discussion on Fig.~\ref{fig:corr}), as nodes with similar initial features are placed close to each other in the embedding space.

\subsubsection{Network Dismantling via Machine Learning}%
GNN-based embedding has significantly advanced network dismantling. \citet{fan2020finding} use RL to train GNNs from experience in dismantling small random networks, but incorporate handcrafted global structural features into the GNN embeddings. \citet{grassia2021machine} train their model on brute-force optimal dismantling sequences found for small networks, yet rely on a set of input node features (degree, neighborhood degree statistics, $k$-coreness, and clustering coefficient) to be calculated before being applied to dismantle a network. \citet{khalil2017learning} show that without manually engineered node features, GNNs can solve other network combinatorial optimization problems, e.g., minimum vertex cover, max-cut, and the traveling-salesperson problem, yet, to the best of our knowledge, no existing method has achieved competitive dismantling performance in this setting.

\section{Network Dismantling as an RL Problem}
Let $\mathcal{G}$ denote the universe of all possible networks and $P_\mathcal{G}$ be the distribution from which a network (or graph) is drawn: $G_0=(V_0, E_0)\sim P_\mathcal{G}$, where $V_0$ is the set of nodes and $E_0$ is the set of edges between the nodes. At each step $t=0,\cdots,|V_0|-1$, the network dismantling policy $\pi(v_i|G_t)$ observes $G_t=(V_t, E_t)$ and outputs a distribution over $v_i\in V_t$, from which a node is drawn $v_t\sim\pi(v_i|G_t)$ and removed from $G_t$ (along with its incident edges), which we formulate as $G_{t+1}=G_t\setminus \{v_t\}$. With slight abuse of notation, we simplify $v_t\sim\pi(v_i|G_t)$ as $v_t=\pi(G_t)$. The standard objective for the network dismantling problem is to minimize the area under the curve (AUC) of the relative Largest Connected Component (LCC) size of the network over the sequence of node removals, which we use to formulate policy optimization:
\begin{align}
\min_\pi \mathbb{E}\left[\sum_{t=0}^{|V_0|-1}\frac{\text{LCC}(G_0\setminus\{v_0, \cdots, v_t\})}{|V_0|}\right],
\label{eq_auc}
\end{align}
where $\text{LCC}(.)$ returns the relative size of the LCC. The optimization problem in \eqref{eq_auc} can be rewritten as the sum of rewards:
\begin{align}
\max_\pi \mathbb{E}\left[\sum_{t=0}^{|V_0|-1} r_t\right], r_t = -\frac{\text{LCC}(G_t\setminus\{v_t\})}{|V_0|}.
\label{eq_objrl}
\end{align}
Since $G_{t+1}$ depends only on the current $G_t$ and $v_t$, the problem in \eqref{eq_objrl} forms a Markov Decision Process (MDP) that can be solved using RL in a data-driven manner. We also follow the standard definition of the state-action value function:
\begin{align}
    Q(G_t, v_i) = r_t+\mathbb{E}\left[\sum_{k=t+1}^{|V_0|-1} r_k\right],
\end{align}
where $Q(G_t, v_i)$ denotes the expected cumulative return (i.e., expected future AUC) starting with the removal of node $v_i$ in network $G_t$ and thereafter following the policy $\pi$.

Specifically, we solve \eqref{eq_objrl} using an Actor-Critic RL algorithm, where the actor corresponds to the dismantling policy $\pi(v_i|G_t)$ and the critic to the state-action value function $Q(G_t, v_i)$. Each training iteration consists of two sub-processes: \textit{value estimation} and \textit{policy improvement}. The Actor-Critic framework features \textit{experience replay}, which increases sample efficiency; each training iteration is performed on a randomly sampled batch of historical state transitions $\mathcal{B} \subseteq \{(G_0, v_0, r_0, G_{1}),\dots, (G_t, v_t, r_t, G_{t+1})\}$. In \textit{value estimation}, MIND estimates $Q(G_t, v_i)$ with the Bellman equation:
\begin{align}
Q(G_t, v_t) \approx \mathbb{E}_{\mathcal{B}}\left[r_t +   Q(G_{t+1}, \pi(G_{t+1})) \right].
\end{align}
Then, the \textit{policy improvement} updates policy $\pi$ by solving:
\begin{align}
\hat{\pi}=\argmax_\pi \mathbb{E}_{G_t \in \mathcal{B}} \left[ Q(G_t, \pi(G_t)) \right].
\label{eq_piupdate}
\end{align}
The batch $\mathcal{B}$ is sampled from trajectories generated on networks from the same distribution $P_\mathcal{G}$ and by the same policy $\pi(v_i|G_t)$ (as in~\eqref{eq_objrl}), therefore, the maximization in Eq.~\eqref{eq_piupdate} corresponds to a Monte Carlo approximation of the original objective in~\eqref{eq_objrl}.

\section{Methodology}
To learn the representation of complex networks that is generalizable across all networks in $\mathcal{G}$, MIND employs a GNN-based RL framework, where both the state-action value function $Q(G_t, v_i)$ and the policy $\pi(v_i|G_t)$ are parameterized by encoder-decoder neural networks. The encoder GNNs take the adjacency representation of $G_t$ as input and extract node embeddings $z_i$, which capture the structural role of each node $v_i \in V_t$. The decoders then map each $z_i$ to a scalar score, i.e., the state-action value of removing $v_i$ in the $Q(G_t, v_i)$ decoder, and the probability of selecting $v_i$ for removal in the $\pi(v_i | G_t)$ decoder. Since the encoders use a permutation-invariant GNN and the decoders are shared across all nodes, this architecture naturally handles networks of varying sizes and calculations of $Q(G_t, v_i)$ and $\pi(v_i|G_t)$ for any $(G_t, v_i)$ pair.

\subsection{GNN Encoder}
To learn network representations not biased by the selection of handcrafted node features, the GNN encoder of MIND initializes each node $v_i \in V_t$ with a set of $H$ all-ones vectors, $\{e_i^h = \mathbf{1}_F | h = 1, 2, \dots, H\}$, each $e_i^h$ serving as a \emph{head}, allowing for simultaneous encoding of diverse structural information. We propose a GNN encoder that incorporates two mechanisms (detailed in this section) that enable effective network representation learning with simple all-ones initialization. 

\subsubsection{All-to-One Attention Mechanism}
At each message-passing iteration, the embedding vector $e_i^h$ of node $v_i$ is updated using the following rule:
\begin{align}
    \label{eq_mp}
    \hat{e}_i^h = \alpha_i^h W_\sigma^h e_i^h + \sum_{j \in \mathcal{N}(i)} \alpha_{i, j}^h W_\nu^h e_j^h,
\end{align}
where $W_\sigma^h, W_\nu^h \in \mathbb{R}^{F \times F}$ are learnable weight matrices. We propose the attention mechanism (MIND-AM) below to calculate the coefficients $\alpha_i^h$ and $\alpha_{i, j}^h$:
\begin{align}
    &\alpha_i^h = \mathrm{MLP}_\sigma^h\left(\left[\mathop{\|}_{h=1}^H W_\sigma^h e_i^h\right]\right), \tag{MIND-AM}\label{eq_am}\\
    &\alpha_{i,j}^h = \mathrm{MLP}_\nu^h\left(\left[\mathop{\|}_{h=1}^H W_\sigma^h e_i^h\right]+\left[\mathop{\|}_{h=1}^H W_\nu^h e_j^h\right]\right), \notag
\end{align}
where $\left[\mathop{\|}_{h=1}^H x^h\right]$ is a vector concatenation as $[x^1 \| \cdots \| x^H]$, and $\mathrm{MLP}_\sigma^h, \mathrm{MLP}_\nu^h: \mathbb{R}^{HF} \to (0, 1)$ are head-specific neural networks with $\mathrm{sigmoid}$-squashed outputs. Equation~\eqref{eq_mp} uses attention coefficients $\alpha^h$ to selectively aggregate messages from different neighbors, similar to the state-of-the-art GATs~\citep{velickovic2018graph, brody2022how}. However, GATs do not learn when $e_i^h = \mathbf{1}_F$ for all $i$ and $h$, since softmax-normalization of $\alpha^h$ keeps node embeddings identical over message-passing iterations (as demonstrated in Fig.~\ref{fig:fig_abla}).

The idea behind MIND-AM is to employ an attention mechanism that i) eliminates the need for softmax normalization of $\alpha^h$ and thus preserves injectivity over the multiset $\{e_j^h : j \in \mathcal{N}(i)\}$, and ii) controls the explosion of $|e_i^h|$ without explicit normalization. Equations (MIND-AM) achieve the above by computing each head's attention coefficient $\alpha^h$ using features from all heads. Thereby, our encoder automatically learns to leverage node information (e.g., local degree-like features) captured in other heads to normalize messages and prevent feature explosion.

\subsubsection{Message Iteration Profiles} Let $e_i^{(k)}$ denote the embedding vector of node $v_i$ after the $k$-th message-passing iterations, calculated by concatenating the embeddings across all heads: $e_i^{(k)}=\left[\mathop{\|}_{h=1}^H e_i^h\right]$, at layer $k$.  MIND computes the Message Profile \eqref{eq_fe} as the final node embedding $z_i$, i.e., the profile of embeddings over all message-passing iterations:
\begin{align}
     z_i = \mathrm{MLP}_\zeta\left(\left[\mathop{\|}_{k=1}^K e_i^{(k)}\right]\right),
\tag{MIND-MP}\label{eq_fe}
\end{align}
where $\mathrm{MLP}_\zeta^h$ is a shared neural network between all nodes.

The first motivation for \ref{eq_fe} is the well-known issue of over-smoothing in node embeddings caused by iterative message-passing~\citep{li2018deeper, oono2020graph}. In Appendix~A, \textbf{Theorem~1}, we show that the embeddings $e_i^{(k)}$ for all nodes $v_i\in V_t$ tend to converge to the primary eigenvector of the message-passing operator as $k$ increases. \ref{eq_fe} retains local structural information from early iterations, thereby preserving the diversity of node embeddings.

The second motivation for \ref{eq_fe} is to extract crucial structural information that can only be obtained by jointly considering all message-passing iterations. Although $e_i^{(k)}$ converges as $k$ increases, nodes converge at different rates, depending on their centrality~\citep{hage1995eccentricity}, as more central nodes begin aggregating information from the entire network earlier. Further theoretical insights into MIND’s expressiveness are provided in Appendix~B, where we show that by learning the message-passing operator in \textbf{Lemma~1}, MIND can approximate the Fiedler vector, a widely-used spectral heuristic in network dismantling literature~\citep{wandelt2018comparative, grassia2021machine}.

\subsection{NN Decoder}
In addition to the node embeddings $z_i$, which encode the structural roles of individual nodes, we introduce a synthetic \textit{omni-node} $v_o$ to $G_t$. Each node $v_i \in V_t$ is connected to $v_o$ via a directed edge, enabling one-way message-passing from all nodes to $v_o$. This design allows the resulting embedding $z_o$ from the GNN Encoder to aggregate information from the entire network and represent the global state of $G_t$. Both the omni-node and individual node embeddings are passed to the decoders to enable state-aware decision-making. In particular, as formulated below, the $Q$ decoder learns to estimate the remaining dismantling AUC, while the $\pi$ decoder  predicts the relative importance of each node for the next removal step:
\begin{align}
\begin{split}
Q(G_t, v_i) &= \mathrm{MLP}_\theta\left([z_i \| z_o]\right), \\
\pi(v_i|G_t) &= \mathrm{MLP}_\phi\left([z_i \| z_o]\right).
\end{split}
\end{align}
By leveraging both local information (through $z_i$) and global information (through $z_o$), the learned network dismantling policy can perform long-term planning and adapt based on the current state of dismantling. The neural networks $\mathrm{MLP}_\theta$ and $\mathrm{MLP}_\phi$ are shared across all nodes, enabling MIND to generalize across networks of varying sizes.

\subsection{Systematically Diversified Training Networks}
Our goal is to train a universal dismantler that generalizes across all $G_0 \in \mathcal{G}$. So, it is essential to train on diverse network configurations. For this purpose, the common practice is to generate networks of different sizes (and densities) using random graph models. The significance of the famous graph models, to an extent however, does not reflect their representativeness of the real (or possible) networks (and arguably has more to do with tractable mathematical properties). Here, we propose a systematic procedure to generate random training networks that better reflect the structural diversity of real-world networks. In short, the proposed procedure takes small (100-200 nodes) random networks of different degree distributions and introduces different levels of modularity and degree-assortativity by randomizing the configurations (keeping the degree sequences fixed); this also attenuates the geometrical properties inherited from the graph generation models. 

We first synthesize 10,000 random networks using Linear Preferential Attachment (LPA) \citep{newman2018networks}, Copying Model \citep{kumar2000stochastic}, and Erdos-Renyi (ER) \citep{erdds1959random} models. To enhance the structural diversity, we apply degree-preserving edge rewirings to induce different types of node mixings. Specifically, we perform random edge rewirings that either favor or discourage connections between nodes with similar labels, either by degree to create varying levels of degree assortativity (assortative, uncorrelated, disassortative), or randomly to induce varying levels of modularity (modular, random, and multipartite). See Appendix~C for further details on the generation process.

\subsection{Entropy-Regularized Policy Learning} To train MIND for solving \eqref{eq_objrl}, we perform multiple dismantling episodes, each beginning with a network randomly sampled from the training set. The specific RL algorithm (detailed in Appendix~D, Algorithm~1) is based on Soft Actor-Critic (SAC)~\citep{haarnoja2018soft}, chosen for its high sample efficiency and its ability to encourage effective exploration via entropy regularization. However, unlike the original SAC, which handles continuous action spaces via Monte Carlo sampling, the action space $V_t$ here is discrete, allowing MIND to directly compute the expectation of the $Q$-value under the current policy for each $G_t$ as:
\begin{align}
\begin{split}
\mathbb{E}_{\pi}\left[Q(G_t, v_t)\right] = \sum_{v_i \in V_t} \pi(v_i|G_t) Q(G_t, v_i).
\end{split}
\end{align}

\section{Experiments}

\begin{figure*}[t]
    \centering
    \includegraphics[width=\linewidth]{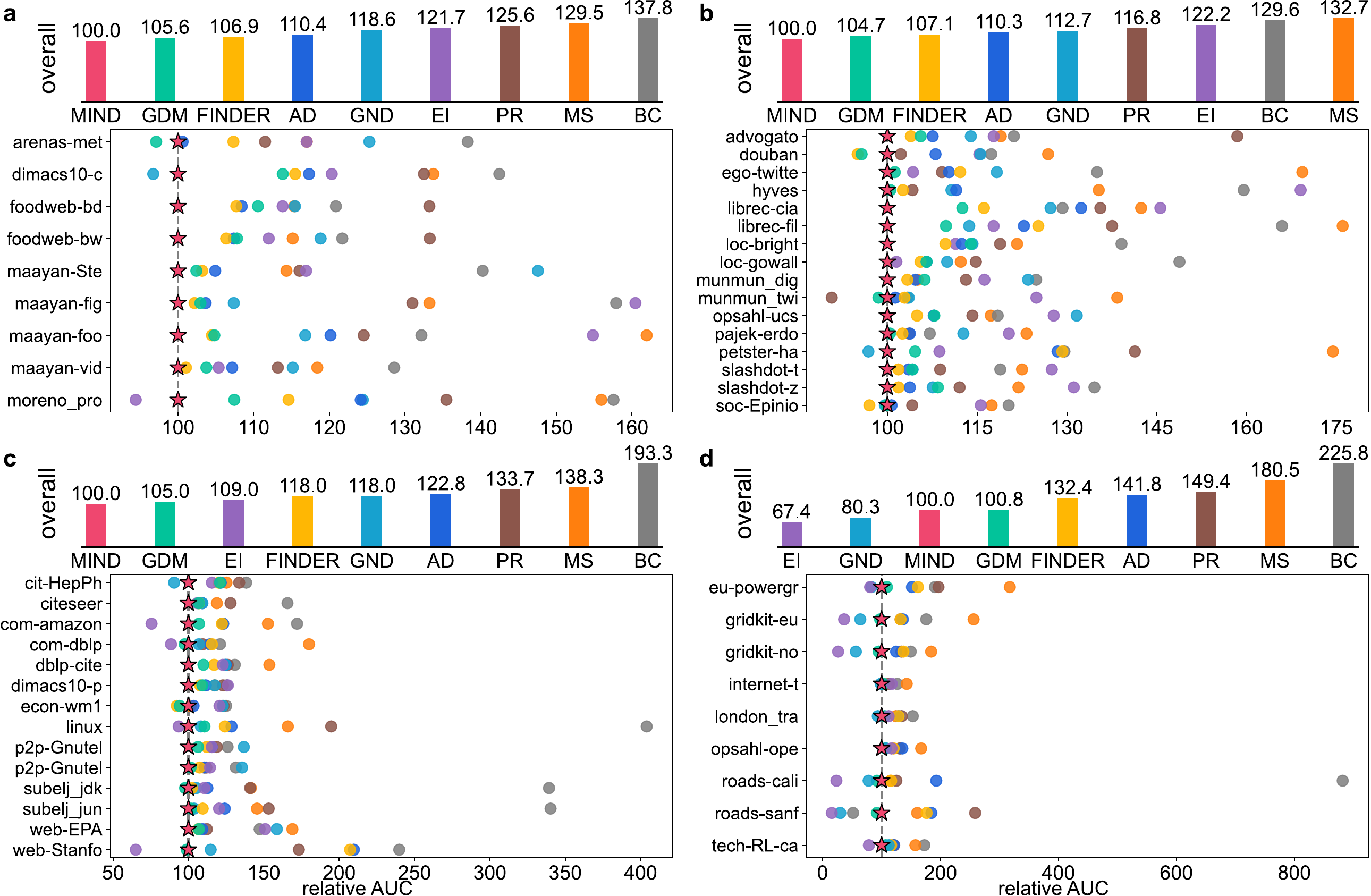}
    \caption{Dismantling performance of MIND and the baseline methods on (\textbf{a}) biological, (\textbf{b}) social, (\textbf{c}) information, and (\textbf{d}) technological networks. The scatter plots display the AUC of dismantling for all methods normalized relative to that of MIND at 100 (AUC above 100 denotes worse performance than MIND). The bar plots summarize the overall performance of the methods in each network domain, with shorter bars corresponding to lower average AUC and thus stronger dismantling performance.}
    \label{fig:main_res}
\end{figure*}

We compare the performance of MIND with a comprehensive set of baseline methods on both real-world and synthetic networks~\citep{braunstein2016network, clusella2016immunization, ren2019generalized, fan2020finding, grassia2021machine}. The baselines represent both the classic methods and the state-of-the-art, identified in the recent review by \citet{artime2024robustness}, and categorized as i) \textit{Centrality Heuristics}: Adaptive Degree (AD), Betweenness Centrality (BC), and PageRank (PR); ii) \textit{Approximate theory methods}: Min-Sum (MS), Explosive Immunization (EI), and Generalized Network Dismantling (GND); and iii) \textit{Machine Learning methods}: FINDER and GDM. MIND has the lowest computational complexity among machine learning-based dismantling methods (see Table~\ref{tb_complexity}), as a result of not requiring the computation of handcrafted node features for embedding initialization. Compared to the approximated theory baselines, MIND is also the most computationally efficient, except for the EI method only on dense networks, where $|E|$ asymptotically grows faster than $|V|\log|V|$.
All baselines are implemented following their respective references (readers may refer to the summary in Table~1 of \citep{artime2024robustness}). MIND is trained over 8 million dismantling episodes, each initialized with a random selection from the training set of 10,000 small synthetic networks. The detailed training setup of MIND is provided in Appendix~D.

\subsection{Result on Real Networks}
We evaluate MIND on real-world networks across four domains, namely, biological, social, information, and technological---covering a wide range of properties and sizes from 128 to 1.4 million nodes (summarized in Appendix~E, Table~4).
Figure~\ref{fig:main_res} reports the AUC of the dismantling curve for all methods, normalized relative to MIND for each network; the bar plots summarize the overall performances in each domain. The detailed relative AUC values are provided in Appendix~F.

\begin{table}[t]
\centering
\begin{tabular}{lc}
\hline
\multicolumn{1}{c}{Method} & \multicolumn{1}{c}{Complexity} \\ \hline
MS     & $\mathcal{O}(|V|\log |V|)+\mathcal{O}(|E|)$ \\
EI     & $\mathcal{O}(|V|\log |V|)$  \\
GND    & $\mathcal{O}(|V|\log^{2+\epsilon}|V|)$ \\
FINDER & $\mathcal{O}(|V|\log |V|+|E|)$  \\
GDM    & $\mathcal{O}(|V|\langle d^2\rangle + |E|)$\\ 
MIND   & $\mathcal{O}(|V|+|E|)$  \\ \hline
\end{tabular}
\caption{Computational complexity of methods assuming adjacency list representation of $G = (V, E)$. ($\langle d^2\rangle$ is the second moment of degree.)}
\label{tb_complexity}
\end{table}

The top three methods, ranked by overall performance across all networks, are MIND (100.0), GDM (104.13), and EI (107.96). The results demonstrate that although other machine learning baselines take advantage of handcrafted inputs, MIND consistently achieves stronger performance across all domains. This highlights that handcrafted initial embeddings, despite boosting the GNN training, do not inherently yield strong dismantling performance. In contrast, MIND, empowered by our proposed MIND-AM and MIND-MP mechanisms (see the GNN Encoder section), is able to identify structurally vital nodes purely from adjacency representation, resulting in an effective network dismantling policy. In technological networks (Fig.~\ref{fig:main_res}d), MIND slightly underperforms EI and GND, but still outperforms all other methods, including machine learning baselines (GDM and FINDER). This can be attributed to the limited reach of GNN message-passing in technological networks with very large diameters (e.g., over $100$ for gridkit-eupowergrid and gridkit-north\_america).

\subsection{Result on Synthetic Networks}
\begin{figure}[t]
    \centering\includegraphics[width=\linewidth]{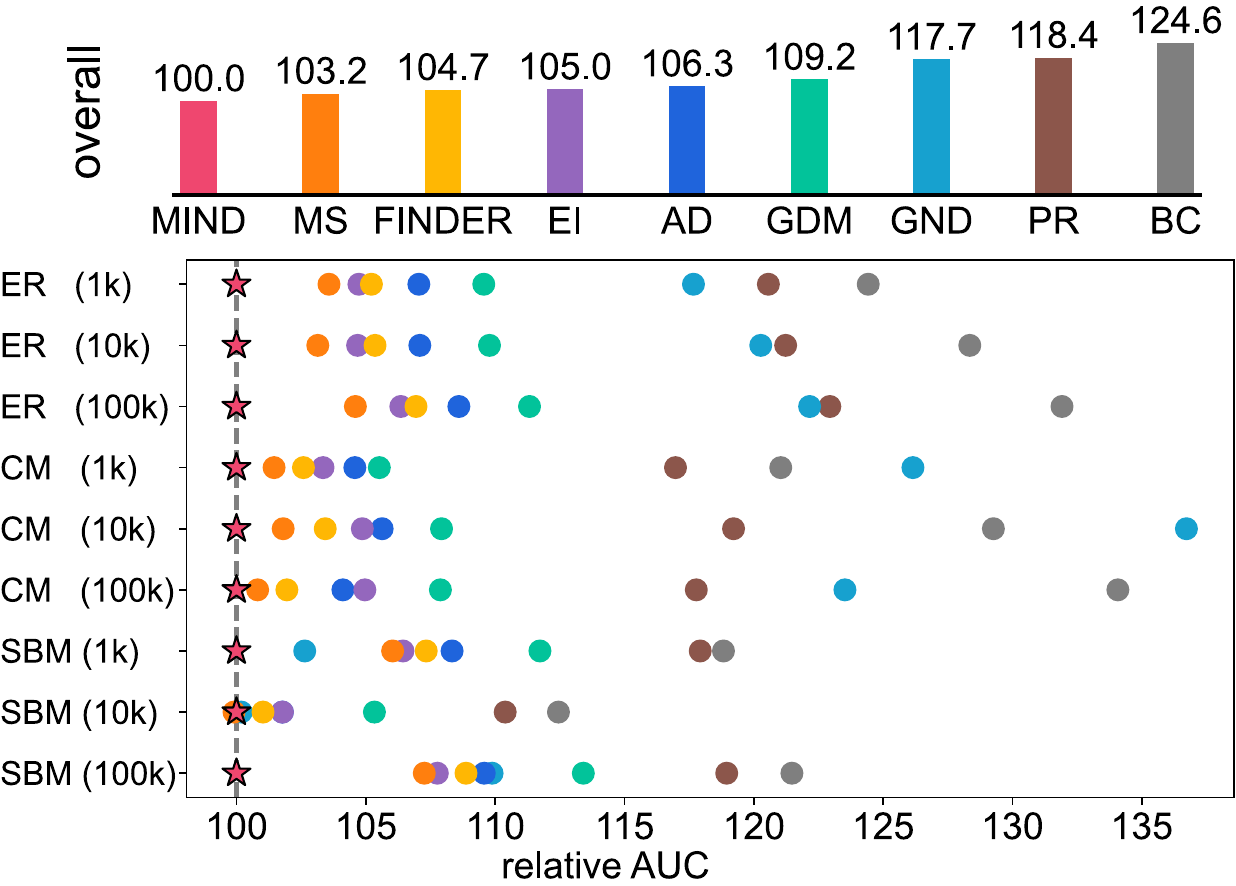}
    \caption{Dismantling performance of MIND and the baseline methods on synthetic networks (ER, CM, and SBM) of varying sizes. The scatter plot compares the dismantling performance of all methods normalized for each network relative to MIND, and the bar plot summarizes the overall performance. The AUCs are averaged over 10 realizations.}
    \label{fig:fig_synth}
\end{figure}

\begin{figure}[t]
    \centering
    \includegraphics[width=\linewidth]{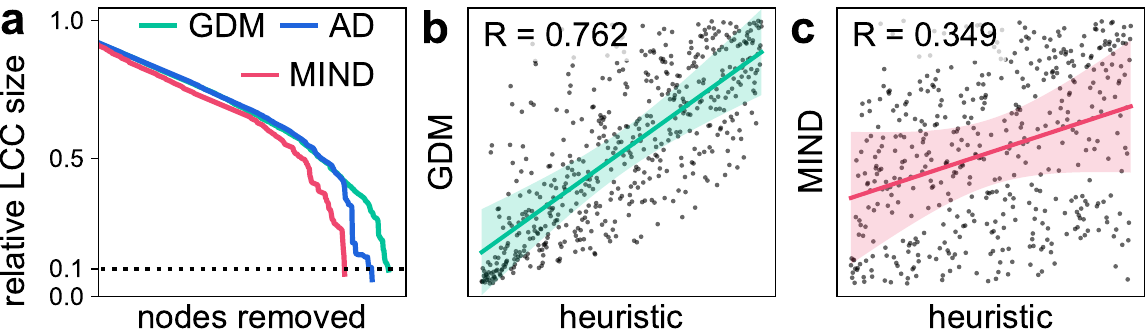}
    \caption{(\textbf{a}) Relative LCC size during the dismantling of an ER network with $1$~k nodes. We compare the node dismantling sequence derived (by PCA) from a set of heuristics with those generated by (\textbf{b}) GDM and (\textbf{c}) MIND; Spearman rank correlation coefficient $R$ and the regression (solid line) with confidence interval (shaded area) are shown on the plots. The heuristic removal sequence is derived from the principal component of GDM’s input node features.}
    \label{fig:corr}
\end{figure}

We evaluate MIND on synthetic networks generated by the widely-adopted protocols in prior studies: (i) ER with average degree $\langle d\rangle=4$, (ii) Configuration Model with $\langle d\rangle=4$ and degree distribution $P(d)\sim d^{-2.5}$, and (iii) Stochastic Block Model with group size $100$, $p_{\text{intra}}=0.1$, and $p_{\text{inter}}=5/|V|$. From each model, we generate networks of sizes $1$~k, $10$~k, and $100$~k, and evaluate the average AUC over $10$ realizations. Figure~\ref{fig:fig_synth} shows scatter plots of the AUC of dismantling for all methods, normalized relative to MIND for each network type, and bar plots comparing the overall performances. Note the testing networks in Fig.~\ref{fig:fig_synth} differ from those that MIND is trained on, in both sizes and methods of generation.

From the results, we observe that MIND significantly outperforms the baselines, except for Stochastic Block Model with 1~$k$ nodes, where the inherent community structures enable the decycling-based method (MS) to achieve a comparable performance to MIND. Notably, although GDM uses the node degree as an input feature, the learned message-passing functions extract structural information that ultimately leads to worse performance than the simple AD across all synthetic networks. Since GDM is trained on the same types of synthetic networks, this suggests that it overfits to the specific training set and loses generalizability to similar structures. This highlights the better generalizability of the RL-based (FINDER and MIND) dismantling policies compared to the supervised learning approach (GDM).

For an ER network where nodes are structurally similar, the simple AD method performs considerably better than GDM (Fig.\ref{fig:corr}a). Following this observation, we investigate whether GDM's degraded performance may be due to an inherent bias towards its handcrafted input features: degree, neighborhood degree statistics, $k$-coreness, and clustering coefficient. Let $X\in \mathbb{R}^{N\times 4}$ be the corresponding feature matrix for the ER network. We combine the input features by projecting $X$ onto the principal eigenvector of $\frac{1}{N}X^\top X$, and order the nodes accordingly to obtain a heuristic dismantling sequence. 
Figure~\ref{fig:corr}b shows a significant correlation between the dismantling sequence of GDM and that of its input features. In contrast, MIND dismantling has weak to no correlation with the heuristic dismantling sequence (Fig.~\ref{fig:corr}c). This corroborates that MIND gains performance by learning the underlying structural importance beyond the standard node heuristics.

\subsection{Ablation Studies}

\begin{figure}[t]
    \centering
    \includegraphics[width=\linewidth]{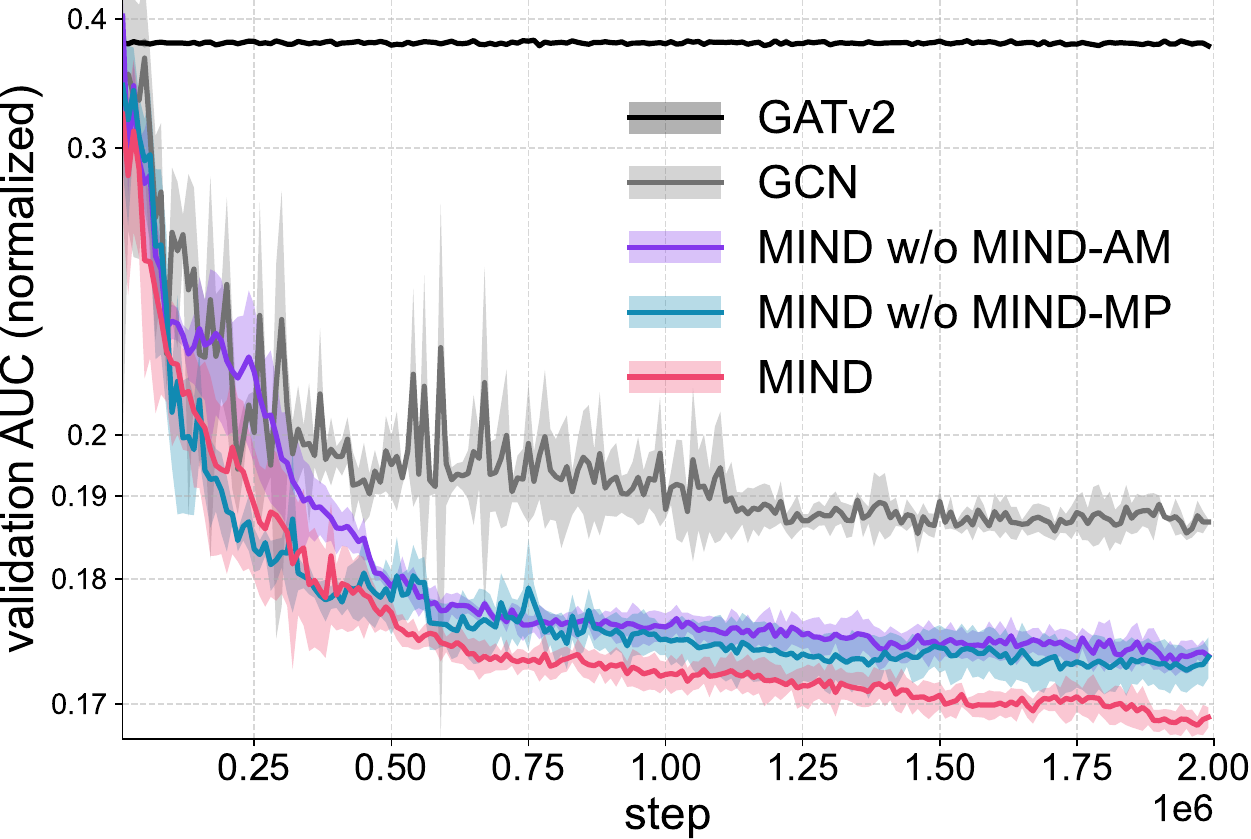}
    \caption{The validation AUC during training (mean$\pm$std). MIND is compared against: i) GATv2; ii) GCN; iii) MIND without MIND-AM (the all-to-one attention mechanism); and iv) MIND without MIND-MP (the message-profile over iterations).}
    \label{fig:fig_abla}
\end{figure}

\subsubsection{GNN Design}
The effectiveness of the proposed MIND-AM and MIND-MP is empirically verified via ablation experiments, where we remove each design component and observe the changes in validation performance (the AUC of dismantling) during training. 
To calculate the validation AUC, we conduct small-scale tests and take the average performance over 20 synthetic networks generated by LPA, ER, and Watts-Strogatz \citep{watts1998collective} models every 10,000 training steps.
We also include the original GATv2 and GCN in the comparison, using them as the message-passing operators while keeping the rest of the MIND framework unchanged (e.g., employing message-iteration profiles instead of the final embedding). The results are shown in Fig.~\ref{fig:fig_abla}, as the mean (solid line) and standard deviation (shaded area) of validation AUC during training over 5 independent runs.

The results demonstrate that MIND, even after removing the architectural designs proposed in this paper, still outperforms the existing GNN baselines. The original GATv2 fails to learn when the initial node embeddings are constants, due to its reliance on softmax-normalized attention coefficients. While GCN is able to learn, it achieves suboptimal performance and exhibits large fluctuations. Removing MIND-AM or MIND-MP degrades the performance of MIND. We perform t-test on the converged AUC values shown in Fig.~\ref{fig:fig_abla} for MIND against its two ablated variants, obtaining $p = 8.1 \times 10^{-4}$ and $p = 4.5\times 10^{-2}$, respectively, highlighting the effectiveness of the all-to-one attention mechanism to allow learnable normalization of the messages, and the benefit of utilizing information from all message-passing iterations to extract deeper structural insights.

\subsubsection{Rewiring for Training Network Diversification}
To assess the effectiveness of our edge-rewiring strategy for diversifying the training networks, we compare MIND with the same model trained on the same networks, only without rewiring. The results are shown in Fig.~\ref{fig:fig_ablala}, with bars depicting the effect of diversifying the training set on the performance of MIND (shorter bars correspond to higher improvement) on real networks listed in Fig.~\ref{fig:main_res}. For each network, the AUC of dismantling with the diversified (rewired) training set is shown as the percentage of the AUC associated with no rewiring  (values below 100 indicate that training on rewired networks has led to a better dismantling policy).

The results demonstrate that rewiring the training networks yields an overall performance improvement on real networks. To analyze the results, we refer to the assortativity and modularity of the real networks in Table~4 in Appendix~E.
The most significant performance gains are observed for highly modular networks. For instance, in roads-california (third-to-last bar in the lower panel of Fig.~\ref{fig:fig_ablala}), which has a modularity of $0.975$, training with rewired networks led to an over $80\%$ reduction in the AUC of dismantling. This improvement is likely linked to the rewiring-induced modularity in the otherwise non-modular synthetic networks.
We also observe notable performance gains for networks with strong disassortativity. For example, in munmun\_twitter\_social (seventh-to-last bar in the upper panel of Fig.\ref{fig:fig_ablala}), which has a degree assortativity of $-0.878$, the AUC is reduced by $40\%$. Although the LPA and Copying Model naturally produce slightly disassortative networks, our diversifying rewirings enable the model to interact with a much wider range of (dis)assortative mixings and thereby significantly enhance the learned embedding and policy by increased exposure to different topologies.

\begin{figure}[t]
    \centering
    \includegraphics[width=\linewidth]{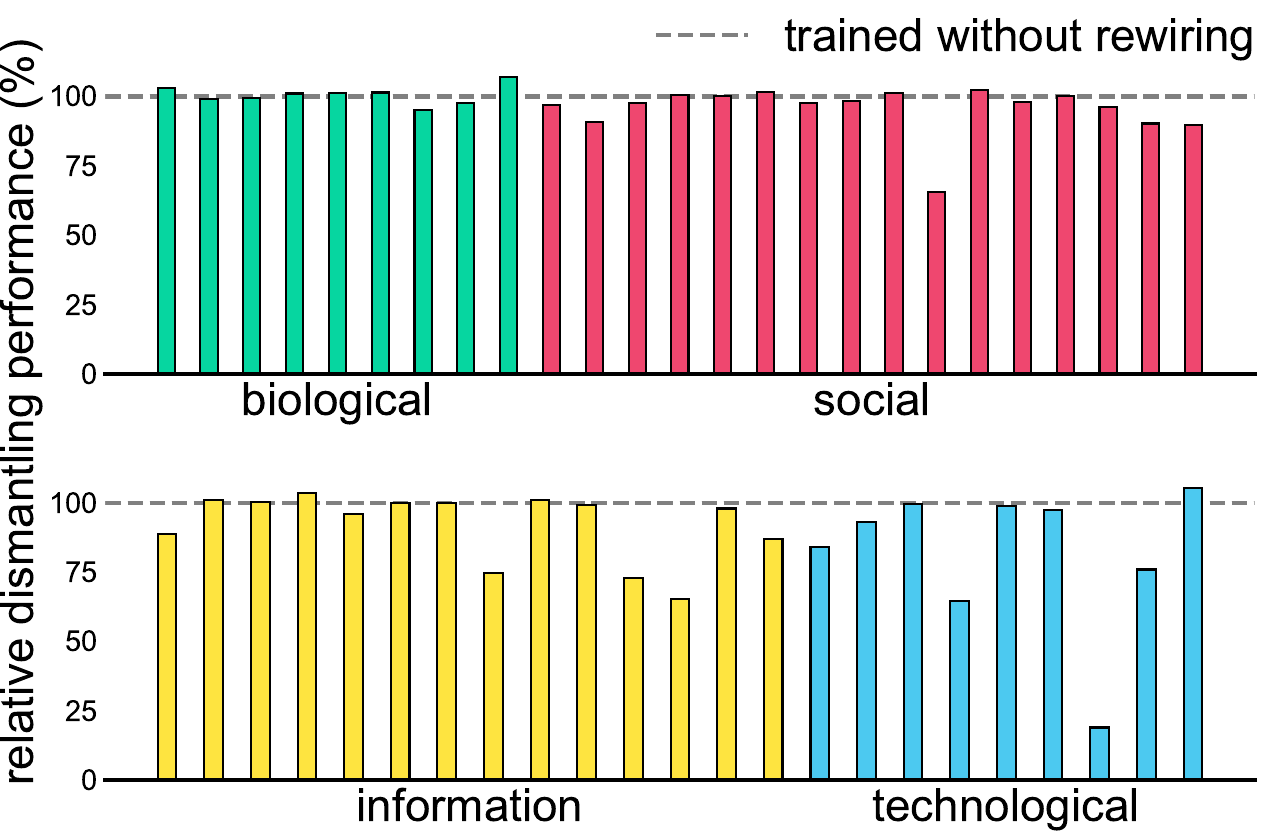}
    \caption{Performance improvement by algorithmically diversified training networks through degree-preserving edge-rewirings. Bars show the AUC of the dismantling of MIND trained on rewired networks, normalized to the baseline without rewiring (gray dotted lines).}
    \label{fig:fig_ablala}
\end{figure}

\section{Conclusion}
Eliminating the need for initializing GNNs with handcrafted features is highly sought after. Besides dropping the feature computation overhead, featureless initialization eliminates the risk of embedding bias and grants autonomy for learning more complex embeddings, which is key to finding better solutions to the downstream problems. We tackled this with two ideas: i) building an expressive message-passing framework, and ii) exposing the model to interactions with systematically diversified network geometries, facilitating the learning of complex structural roles.
The proposed model, applied to the network dismantling problem, achieved state-of-the-art performance on a comprehensive testbed of real-world networks. Note that MIND is computationally more efficient than the well-performing methods of its category, machine learning methods, as well as the well-established dismantling methods in the literature (except for EI only on dense networks). An intriguing conclusion is that the contributions of this manuscript are applicable to many important network/graph problems where an unbiased GNN embedding can be learned on synthesized diverse data and lead to breakthrough solutions.

\section{Acknowledgments} RS and HH acknowledge the Australian Research Council Discovery Project No. DP240102585, and the support from the IOTA Foundation. RS and HT acknowledge the funding by UK Research and Innovation (UKRI) under the UK government’s Horizon Europe funding guarantee (grant number 101084642). HT thanks Chi-Bach Pham for his help with some of the experiments.

\bibliography{aaai2026}


\appendix
\setcounter{secnumdepth}{1}
\setcounter{theorem}{0}

\renewcommand{\theequation}{A\arabic{equation}}
\setcounter{equation}{0}

\vspace{3em}
\hrule
\section*{Appendix}
\vspace{1em}
\hrule
\vspace{3em}

\section{Convergence of Node Embeddings}
\label{supp:ec}

Let us, without loss of generality, assume a GNN with a single attention head. For a network $G$ with $N$ nodes, the embedding of node $v_i$ after $k$ message-passing steps is denoted by $e_i^{(k)} \in \mathbb{R}^F$. The matrix containing the $k$-th step embeddings of all nodes is:
\begin{align}
    e^{(k)} = \begin{bmatrix} {e_1^{(k)}}^\top \\ \vdots \\ {e_N^{(k)}}^\top \end{bmatrix} \in \mathbb{R}^{N \times F}.
\end{align}
We also define the vectorized node embeddings as:
\begin{align}
    \mathrm{vec}(e^{(k)}) = [e_1^{(k)} \| \dots \| e_N^{(k)}] \in \mathbb{R}^{NF}.
\end{align}

Also, let $W \in \mathbb{R}^{F \times F}$ be a learnable weight matrix, and $A_\alpha \in \mathbb{R}^{N \times N}$ be an attention-enhanced adjacency matrix with elements defined as:
\begin{align}
[A_\alpha]_{i,j} =
\begin{cases}
\alpha_i & \text{if } j = i \\
\alpha_{i,j} & \text{if } j \in \mathcal{N}(i), \\
0 & \text{otherwise}
\end{cases}
\end{align}
where $\mathcal{N}(i)$ denotes the set of neighbors of node $i$. Then, the message-passing update in Eq.~(6) can be written as:
\begin{align}
e^{(k+1)} = A_\alpha e^{(k)} W.
\end{align}
\begin{theorem}
Assume $A_\alpha$ and $W$ are both diagonalizable with respective primary eigenvectors $u_1 \in \mathbb{R}^N$ and $v_1 \in \mathbb{R}^F$, whose corresponding eigenvalues are $\lambda_1$ and $\mu_1$, such that $|\lambda_1| > |\lambda_n|$ and $|\mu_1| > |\mu_n|$ for all $n \ne 1$. Then, for any initial embedding $e^{(0)}$ such that $\mathrm{vec}(e^{(0)})$ has nonzero projection onto $u_1 \otimes v_1$:
\begin{align*}
e^{(k)} \rightarrow u_1 v_1^\top \quad \text{as } k \rightarrow \infty.
\end{align*}
In particular, node embeddings $e_i^{(k)}$ converge to a scalar multiple of $v_1^\top$:
\begin{align*}
e_i^{(k)} \rightarrow [u_1]_i \cdot v_1 \quad \text{as } k \rightarrow \infty,
\end{align*}
i.e., all node embeddings align in the same direction and lie in a one-dimensional subspace of $\mathbb{R}^F$.
\label{theo_os}
\end{theorem}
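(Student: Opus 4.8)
The plan is to treat the recursion as a two-sided power iteration and extract its dominant mode via a double spectral expansion. Since the update $e^{(k+1)} = A_\alpha e^{(k)} W$ is linear in $e^{(k)}$ with $A_\alpha$ and $W$ held fixed, unrolling it gives $e^{(k)} = A_\alpha^{\,k}\, e^{(0)}\, W^{k}$. Using diagonalizability I would write $A_\alpha = \sum_{i=1}^N \lambda_i\, u_i \tilde u_i^\top$ and $W = \sum_{j=1}^F \mu_j\, v_j \tilde v_j^\top$, where $\{\tilde u_i\}$, $\{\tilde v_j\}$ are the corresponding dual (left-eigenvector) bases normalized so that $\tilde u_i^\top u_{i'} = \delta_{ii'}$ and $\tilde v_j^\top v_{j'} = \delta_{jj'}$. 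Then $A_\alpha^{\,k} = \sum_i \lambda_i^{\,k} u_i \tilde u_i^\top$, $W^{k} = \sum_j \mu_j^{\,k} v_j \tilde v_j^\top$, and substituting yields
\begin{equation}
e^{(k)} \;=\; \sum_{i=1}^N \sum_{j=1}^F (\lambda_i \mu_j)^{k}\, c_{ij}\, u_i \tilde v_j^\top, \qquad c_{ij} := \tilde u_i^\top e^{(0)} v_j \in \mathbb{R}.
\end{equation}
Equivalently, one can vectorize and view this as ordinary power iteration for the Kronecker operator $A_\alpha \otimes W^\top$ acting on $\mathrm{vec}(e^{(0)})$, whose eigenpairs are $(\lambda_i \mu_j,\, u_i \otimes \tilde v_j)$.

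Next I would isolate the dominant term. From $|\lambda_1| > |\lambda_n|$ and $|\mu_1| > |\mu_n|$ for $n \neq 1$, a short case split (if $j \neq 1$ then $|\lambda_i\mu_j| \le |\lambda_1||\mu_j| < |\lambda_1\mu_1|$; if $j = 1$ then $i \neq 1$ and $|\lambda_i\mu_1| < |\lambda_1\mu_1|$) shows $|\lambda_1 \mu_1| > |\lambda_i \mu_j|$ for every $(i,j) \neq (1,1)$, so $\lambda_1\mu_1$ is the unique eigenvalue of maximal modulus of the combined iteration. Dividing the expansion by $(\lambda_1\mu_1)^k$, every off-dominant term carries a factor $(\lambda_i\mu_j / \lambda_1\mu_1)^k \to 0$, so
\begin{equation}
\frac{e^{(k)}}{(\lambda_1\mu_1)^{k}} \;\longrightarrow\; c_{11}\, u_1 \tilde v_1^\top \qquad (k \to \infty).
\end{equation}
The hypothesis that $\mathrm{vec}(e^{(0)})$ has nonzero projection onto the dominant eigendirection is exactly what guarantees $c_{11} = \tilde u_1^\top e^{(0)} v_1 \neq 0$, so the limit is genuinely rank one. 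Reading this off row by row gives $e_i^{(k)}/(\lambda_1\mu_1)^k \to c_{11} [u_1]_i\, \tilde v_1^\top$: all node embeddings collapse onto a common one-dimensional subspace of $\mathbb{R}^F$, with node-dependence entering only through the scalar $[u_1]_i$, which is the claimed over-smoothing behaviour.

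I expect the difficulties here to be bookkeeping rather than depth. First, $e^{(k)} \to u_1 v_1^\top$ is literally true only after the normalization by $(\lambda_1\mu_1)^k$ above (or by $\|e^{(k)}\|$); otherwise the iterates diverge or vanish unless $\lambda_1\mu_1 = 1$, so the precise claim is convergence \emph{in direction}, which the "scalar multiple of $v_1^\top$" phrasing already signals, and I would state it that way. Second, because $W$ multiplies on the right, the rows of $e^{(k)}$ align with the dominant \emph{left} eigenvector of $W$; this coincides with the stated right eigenvector $v_1$ precisely when $W$ is normal (e.g.\ symmetric), which I would either assume outright or flag as the intended reading of "$v_1$" (and, correspondingly, of "$u_1 \otimes v_1$" in the hypothesis). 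Third, the argument treats $A_\alpha$ as a fixed operator, whereas in MIND the attention coefficients depend on the current embeddings; the theorem is thus a linearized, idealized picture, and I would present it as such rather than attempt to track the coupled nonlinear dynamics.
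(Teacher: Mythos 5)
Your argument is correct and takes essentially the same route as the paper: the paper vectorizes and works with the Kronecker operator, expanding $\mathrm{vec}(e^{(0)})$ in the eigenbasis $\{u_n \otimes v_m\}$ and letting the dominant eigenvalue $\lambda_1\mu_1$ kill the other modes, whereas you unroll directly to $e^{(k)} = A_\alpha^k e^{(0)} W^k$ and do a double spectral expansion; the two are algebraically identical, and you even note the Kronecker view as an aside.

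Where you add value is in the three caveats you flag, which are genuine issues with the theorem as stated (and which the paper's own proof quietly absorbs rather than resolving). First, the paper's penultimate displayed equation writes $\lim_{k\to\infty}\mathrm{vec}(e^{(k)}) = c_{1,1}(\lambda_1\mu_1)^k(u_1\otimes v_1)$ with $k$ surviving inside a $k\to\infty$ limit, i.e.\ the unnormalized claim ``$e^{(k)}\to u_1 v_1^\top$'' is not literally a limit unless $\lambda_1\mu_1 = 1$. You are right that the theorem should assert convergence of $e^{(k)}/(\lambda_1\mu_1)^k$, or convergence in direction, which is anyway what the final ``one-dimensional subspace'' sentence of the theorem is getting at. Second, you correctly spot that because $W$ multiplies on the right, the row space of the limit aligns with the dominant \emph{left} eigenvector of $W$: the correct Kronecker factor for the paper's row-stacking $\mathrm{vec}$ is $A_\alpha \otimes W^\top$, not $A_\alpha \otimes W$, so the limiting rank-one matrix is $u_1 \tilde v_1^\top$ rather than $u_1 v_1^\top$, and the two coincide only when $W$ is normal. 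This is a real (if minor) error in the paper's statement and proof. Third, your observation that $A_\alpha$ is treated as fixed while in the actual MIND-AM update the attention coefficients are embedding-dependent is a fair scoping remark; the theorem is a linearized caricature and should be read as such. None of these change the qualitative over-smoothing conclusion, which is all the paper uses, but your version is the more careful one.
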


\begin{proof}
The message-passing update $e^{(k+1)} = A_\alpha e^{(k)} W$ can be equivalently written in vectorized form as:
\begin{align}
    \mathrm{vec}(e^{(k+1)}) = M \cdot \mathrm{vec}(e^{(k)}),
\end{align}
where
\begin{align*}
M := A_\alpha \otimes W \in \mathbb{R}^{NF \times NF}.
\end{align*}
Since both $A_\alpha$ and $W$ are diagonalizable, so is $M$ due to properties of the Kronecker product. In particular, the eigenvectors $\{u_n \otimes v_m\}$ with corresponding eigenvalues $\lambda_n \mu_m$ form a complete basis of $\mathbb{R}^{NF}$. The initial embedding $\mathrm{vec}(e^{(0)}) \in \mathbb{R}^{NF}$ can be expressed as:
\begin{align*}
\mathrm{vec}(e^{(0)}) = \sum_{n=1}^N \sum_{m=1}^F c_{n,m} (u_n \otimes v_m),
\end{align*}
where $c_{n,m} \in \mathbb{C}$ are projection coefficients; by assumption, $c_{1,1} \ne 0$.

After $k$ iterations of message-passing, the following holds:
\begin{align*}
\mathrm{vec}(e^{(k)}) 
&= M^k \, \mathrm{vec}(e^{(0)}) \\
&= \sum_{n=1}^N \sum_{m=1}^F c_{n,m} (\lambda_n \mu_m)^k (u_n \otimes v_m) \\
&= (\lambda_1 \mu_1)^k c_{1,1} (u_1 \otimes v_1) + \\
&\sum_{(n,m) \ne (1,1)} (\lambda_1 \mu_1)^k c_{n,m} \left( \frac{\lambda_n \mu_m}{\lambda_1 \mu_1} \right)^k (u_n \otimes v_m).
\end{align*}

Since $|\lambda_n \mu_m / \lambda_1 \mu_1| < 1$ for all $(n,m) \ne (1,1)$, all non-dominant terms vanish as $k \to \infty$, yielding:
\begin{align*}
\begin{split}
\lim_{k \to \infty} \mathrm{vec}(e^{(k)}) &= c_{1,1} (\lambda_1 \mu_1)^k (u_1 \otimes v_1) \\
\lim_{k \to \infty} e^{(k)} &= c_{1,1} (\lambda_1 \mu_1)^k \cdot u_1 v_1^\top.
\end{split}
\end{align*}
Therefore, the following holds for each node's embedding $e_i^{(k)}$:
\begin{align*}
e_i^{(k)} \rightarrow [u_1]_i c_{1,1} (\lambda_1 \mu_1)^k \cdot v_1, \quad \text{as } k \rightarrow \infty,
\end{align*}
i.e., all node embeddings align to the same direction and lie in a one-dimensional subspace of $\mathbb{R}^F$ in the limit of the message-passing iteration $k$.
\end{proof}

\section{Estimating Structural Heuristics}
\label{supp:es}

In this section, we conceptually demonstrate that the node encoding $z_i$ generated by MIND is capable of capturing diverse spectral information, even when node features are initialized as constants. For example, we demonstrate that MIND with two heads can approximate the \emph{Fiedler vector}. (The eigenvector corresponding to the second smallest eigenvalue of the graph Laplacian, often referred to as the Fiedler vector, gives an embedding of nodes that reveals the optimal cut and thus is extremely relevant to network dismantling and similar problems.)

We start by defining a specific message-passing operator and analyzing its spectral properties.

\begin{lemma}
Let $G$ be an undirected, non-bipartite graph with $N$ nodes, and let $A$ and $D$ denote its adjacency matrix and diagonal degree matrix, respectively. The message-passing operator is defined as
\begin{align}
T := \frac{1}{2}\left(I + D^{-1/2} A D^{-1/2}\right),
\label{eq_mpfv}
\end{align}
which has all eigenvalues lying in the interval $(0, 1]$. Moreover, the primary eigenvector $u_1$ of $T$ associated with the eigenvalue $\lambda_1 = 1$ is $u_1 = D^{1/2} \mathbf{1}_N$.
\label{lemma_mpop}
\end{lemma}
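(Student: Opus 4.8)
The plan is to establish the two claims about $T := \tfrac12(I + D^{-1/2} A D^{-1/2})$ separately: first that $\operatorname{spec}(T) \subseteq (0,1]$, then that $\lambda_1 = 1$ with eigenvector $u_1 = D^{1/2}\mathbf{1}_N$. The natural route is to reduce everything to known spectral facts about the normalized graph Laplacian $\mathcal{L} := I - D^{-1/2} A D^{-1/2}$, since $T = I - \tfrac12 \mathcal{L}$, so the eigenvalues of $T$ are exactly $1 - \tfrac12\theta$ as $\theta$ ranges over the eigenvalues of $\mathcal{L}$, and the eigenvectors coincide.

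First I would recall (or quickly re-derive) that $\mathcal{L}$ is symmetric positive semidefinite with all eigenvalues in $[0,2]$: symmetry is immediate from symmetry of $A$ and $D$; the bound $\theta \in [0,2]$ follows because $D^{-1/2} A D^{-1/2}$ has the same eigenvalues as the (symmetric) transition-like matrix and its spectral radius is at most $1$ (a standard consequence of Perron–Frobenius / Gershgorin applied to $D^{-1}A$, which is row-stochastic). Translating, $\theta \in [0,2]$ gives $1 - \tfrac12\theta \in [0,1]$, i.e. $\operatorname{spec}(T) \subseteq [0,1]$. To sharpen the lower endpoint to an open interval, I would invoke non-bipartiteness: it is classical that $\theta = 2$ is an eigenvalue of $\mathcal{L}$ if and only if some connected component of $G$ is bipartite; since $G$ is non-bipartite, $\theta < 2$ strictly for every eigenvalue, hence $1 - \tfrac12\theta > 0$, giving $\operatorname{spec}(T)\subseteq (0,1]$.

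For the second claim, I would simply verify directly that $D^{1/2}\mathbf{1}_N$ is an eigenvector of $\mathcal{L}$ with eigenvalue $0$: compute $D^{-1/2} A D^{-1/2}(D^{1/2}\mathbf{1}_N) = D^{-1/2} A \mathbf{1}_N = D^{-1/2} (d_1,\dots,d_N)^\top = D^{1/2}\mathbf{1}_N$, using that the $i$-th entry of $A\mathbf{1}_N$ is the degree $d_i$. Hence $\mathcal{L}(D^{1/2}\mathbf{1}_N) = 0$, so $T(D^{1/2}\mathbf{1}_N) = D^{1/2}\mathbf{1}_N$, i.e. eigenvalue $1$. Since $1$ is the largest possible eigenvalue of $T$ by the first part, this is the principal eigenvalue $\lambda_1 = 1$ with principal eigenvector $u_1 = D^{1/2}\mathbf{1}_N$ (uniqueness up to scaling would additionally require $G$ connected, which I would note; if not assumed, "the" principal eigenvector should be read as "a" principal eigenvector).

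The only genuinely non-routine ingredient is the strict inequality $\theta < 2$, i.e. ruling out $-1$ as an eigenvalue of $D^{-1/2}AD^{-1/2}$; everything else is direct computation or a textbook citation. I expect the main obstacle — or rather the main thing to be careful about — is precisely invoking the bipartiteness characterization correctly (it is a per-component statement) and making sure the hypothesis "non-bipartite" is used exactly where the open endpoint is needed, rather than overclaiming strictness elsewhere.
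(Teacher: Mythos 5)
Your proof is correct and follows the same route as the paper: write $T = I - \tfrac{1}{2}\mathcal{L}$ where $\mathcal{L}$ is the normalized Laplacian, invoke the spectral bound $\operatorname{spec}(\mathcal{L}) \subseteq [0,2)$ for non-bipartite graphs to get $\operatorname{spec}(T)\subseteq(0,1]$, and verify that $D^{1/2}\mathbf{1}_N$ is the eigenvector of $\mathcal{L}$ at eigenvalue $0$, hence of $T$ at eigenvalue $1$. Your added observations---the explicit computation of $\mathcal{L}(D^{1/2}\mathbf{1}_N)=0$ rather than citing it, and the note that calling $u_1$ \emph{the} principal eigenvector tacitly requires $G$ to be connected---are valid precisions the paper leaves implicit.
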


\begin{proof}
The normalized Laplacian of $G$ can be written as $L := I - D^{-1/2} A D^{-1/2} \in \mathbb{R}^{N \times N}$. Since $G$ is undirected (i.e., $L$ is symmetric) and non-bipartite, all eigenvalues $\lambda_n$ of $L$ satisfy $\lambda_n \in [0, 2)$.

We can express $T$ in terms of $L$ as follows:
\begin{align}
T = \frac{1}{2}(I + D^{-1/2} A D^{-1/2}) = I - \frac{1}{2}L.
\end{align}
Let $u_n$ be the eigenvector of $L$ associated with the $n$-th eigenvalue $\lambda_n$, i.e., $Lu_n = \lambda_n u_n$. Then:
\begin{align}
Tu_n = \left(I - \frac{1}{2} L\right) u_n = \left(1 - \frac{\lambda_n}{2}\right) u_n,
\end{align}
such that $1 - \frac{\lambda_n}{2}$ is an eigenvalue of $T$. Since $\lambda_n \in [0, 2)$, it follows that
\begin{align}
1 - \frac{\lambda_n}{2} \in (0, 1],
\end{align}
i.e., all eigenvalues of $T$ lie in $(0, 1]$.

In particular, the largest eigenvalue of $T$ is $1$ (corresponding to the smallest eigenvalue of $L$, $\lambda = 0$). The eigenvector associated with $\lambda = 0$ for the normalized Laplacian is proportional to $D^{1/2} \mathbf{1}_N$. Hence, the primary eigenvector of $T$ is $u_1 = D^{1/2} \mathbf{1}_N$, up to normalization.
\end{proof}

Next, we show that MIND can estimate the Fiedler vector by learning to approximate specific functions in its heads. This follows from the fact that, message-passing operator $T$ defined in \eqref{eq_mpfv} has eigenvalues within the range $(0, 1]$. Consequently, repeated message-passing using $T$ gradually attenuates the components of the node embeddings $e^{(k)}$ that align with non-primary eigenvectors. Among these, the component aligned with the Fiedler vector, which corresponds to the second-largest eigenvalue of $T$, decays more slowly than all others except the primary eigenvector. As a result, as $k \rightarrow \infty$, only the primary and Fiedler vectors approximately remain, such that we can approximate the Fiedler vector by performing repeated message-passing with $T$ and subsequently removing the primary eigenvector $u_1 = D^{1/2} \mathbf{1}_N$. In particular, MIND with two heads, each with feature dimension $F=1$, is sufficient for such estimation:

\textbf{Head~1} estimates the primary eigenvector $u_1$ of $L$ by capturing the node degrees. In the first round of message-passing, MIND can learn the self-weight $W^1_\sigma = 0$ and the neighbor-weight matrix $W^1_\nu = 1$. This yields $e_i^{(1)} = d_i$, where $d_i$ is the degree of node $v_i$. Since the degree information can be preserved across all subsequent layers in Head~1 by learning the attention weights $\alpha_i^h = 1$ and $\alpha_{i,j}^h = 0$ for all heads $h$ and neighbors $j$, we assume that the primary eigenvector $\tilde{u}_1 = D^{1/2} \mathbf{1}_N$ of $T$ is stored in Head~1.

\textbf{Head~2} approximates repeated message-passing with $T$ by learning the following neural networks:
\begin{align*}
&[T]_{i, j} = \\
&\begin{cases}
\mathrm{MLP}_\sigma(d_i, \cdots) = \frac{1}{2}+\frac{1}{d_i},&j=i\\
\mathrm{MLP}_\nu(d_i, d_j, \cdots) = \frac{1}{2}+\frac{1}{\sqrt{d_i d_j}},&j \in \mathcal{N}(i)\\
0, &\text{o.w.},
\end{cases}
\end{align*}
where $d_i$ and $d_j$ are learned and stored in Head~1, and Head~2 can leverage them thanks to the (MIND-AM) mechanism proposed in this work.

The initial embedding in Head~2 is set to $1$ for every node $v_i \in V$; that is, the initial embedding vector of Head~2 containing all node embeddings is $e^{(0)} = \mathbf{1}_N$. Let $e^{(k)} \in \mathbb{R}^N$ denote the vector of node embeddings after the $k$-th message-passing iteration in Head~2. Since $T$ is diagonalizable, we can express the initial embedding vector $e^{(0)}$ as a linear combination of the eigenvectors $u_1,\dots, u_N$ of the matrix $T$:
\begin{align}
e^{(0)} = \sum_{n=1}^N c_n u_n,
\end{align}
where $c_n$ denotes the projection coefficient onto $u_n$. After $k$ iterations of message-passing, the embedding evolves as:
\begin{align}
\begin{split}
e^{(k)} &= T^k e^{(0)} = \sum_{n=1}^{N} c_n \lambda_n^k u_n \\
&= c_1 \lambda_1^k u_1 + c_2 \lambda_2^k u_2 + \epsilon \\
&= c_1 D^{1/2} \mathbf{1}_N + c_2 \lambda_2^k u_2 + \epsilon,
\end{split}
\label{eq_mpna}
\end{align}
where $\lambda_1 = 1$, and $\epsilon$ captures the residual terms that decay to zero as $k \to \infty$. Thus, $e^{(k)}$ asymptotically aligns with $u_1$ while preserving a vanishing component along the direction of $u_2$.

Given access to $e^{(k)}$ and the degree estimates from Head~1, the function $\mathrm{MLP}_f$ in (MIND-MP) can be trained to recover $u_2$ by removing the projection of $e^{(k)}$ onto $D^{1/2} \mathbf{1}_N$. Specifically:
\begin{align}
\begin{split}
\tilde{u}_2 &\approx e^{(k)} - \frac{\tilde{u}_1^\top e^{(k)}}{|\tilde{u}_1|_2^2} \tilde{u}_1 \approx C u_2,
\end{split}
\label{eq_v2approx}
\end{align}
where $C$ is a constant scalar. The inner-product $\tilde{u}_1^\top e^{(k)}$ aggregates information from all nodes. Within the message-passing framework, this is made possible by our omni-node $v_o$, which is an extra node connected to every node in the original network.

Overall, this section demonstrates that, with constant input node features, the Fiedler vector can naturally emerge from message-passing, enabled by the MIND-MP module and the omni-node. 

\section{Generating Synthetic Networks}
\label{supp:gs}

\subsubsection{Random Network Models}
To synthesize each training network, we randomly select one of three generation models---LPA, Copying Model, or ER---with equal probability. The network size $|V|$ is sampled uniformly from the range of $100$ to $200$. The average degree is determined by the parameter $m$ drawn randomly either from the set $\{1, 6, 8, 10\}$ with probability $1/3$, or from the set $\{2, 3, 4, 5\}$ with probability $2/3$. For Copying Model and LPA networks, a degree distribution (power-law) exponent $\gamma\in [2, 4]$ is also selected uniformly at random. 

The LPA model starts with a clique of $m+1$ nodes. Then, at each step, a new node forms connections to $m$ existing nodes with the probability of choosing an existing node to form a connection being proportional to $d_i + m(\gamma - 3)$, where $d_i$ is the degree of an existing node $v_i$. This results in a scale-free network with a power-law degree distribution $P(d)\sim d^{-\gamma}$ and an average degree of $\langle d \rangle = 2m$. When generating using the Copying Model, we also start with a clique of $m+1$ nodes. Each of the $m$ edges of a newly added node is formed by either i) connecting to a randomly selected existing node, with probability $\alpha = \frac{2-\gamma}{1-\gamma}$, or ii) connecting to one neighbor of a randomly selected existing node, with probability $1 - \alpha$. This also results in a scale-free network with a power-law degree distribution $P(d) \sim d^{-\gamma}$ and an average degree of $\langle d \rangle = 2m$. (LPA and Copying Model generate fundamentally different structures given the same parameters, due to their different mechanisms of edge formation.) When generating using the ER model, for all pairs of nodes, an edge will be formed with probability $p=\frac{m}{n-1}$, leading to a network whose degree distribution is Binomial (well-approximated by Poisson) with an average degree of $\langle d\rangle=m$.

\subsubsection{Degree-Preserving Rewiring}
In the next step, we perform degree-preserving rewirings to randomize the structural signature of the generator models, and induce different levels of modularity and degree-assortativity in the network. Modularity is the quality of having well-defined communities, and degree-assortativity is the tendency of nodes to be connected to other nodes with similar degrees. Networks with the same exact degree sequence can have different modularity levels (from being highly modular to the opposite extreme of being anti-modular or almost bipartite) and different degree mixing (from disassortative to uncorrelated to assortative). To control these qualities via random rewiring, as will be elaborated later, we treat modularity (resp. degree-assortativity) as the tendency of nodes of similar randomly assigned numerical labels (resp. similar degree-based assigned labels) to be connected to one another. 

The degree-preserving rewiring is performed by selecting two edges $(v_i, v_l)$ and $(v_j, v_k)$ (with four distinct end-nodes) and swapping them with $(v_i, v_k)$ and $(v_j, v_l)$ if they do not already exist. It can be seen that one double-edge swap of this form will change the neighborhood of four nodes without changing the degree of any node in the network. We also reject an edge swap if performing it disconnects the network into isolated components. Let $\mathrm{l}_i$ be the unique integer label identifying node $v_i$. Given a random double-edge rewiring $\{(v_i, v_l), (v_j, v_k)\} \rightarrow \{(v_i, v_k), (v_j, v_l)\}$, the rewiring is accepted if $(\mathrm{l}_i-\mathrm{l}_j)(\mathrm{l}_k-\mathrm{l}_l)>0$ (resp. $(\mathrm{l}_i-\mathrm{l}_j)(\mathrm{l}_k-\mathrm{l}_l)<0$) to increase (resp. decrease) the connectivity between similarly labeled nodes; this rewiring direction can be deemed as label-assortative (resp. label-disassortative) regardless of whether it is used to control modularity or degree-assortativity. The following describes the rewiring process to diversify the random training networks.

\begin{table}[t]
\centering
\vspace{1em}
\begin{tabular}{l|ccc}
\hline
statistics     & mean (std)     & min   & max    \\ \hline
size           & 149.62 (29.06) & 100   & 200    \\
avg. degree    & 7.49 (4.37)    & 1.98  & 19.45  \\
min degree     & 3.36 (2.56)    & 1.00  & 10.00  \\
max degree     & 29.17 (16.83)  & 7.00  & 102.00 \\
assortativilty & -0.06 (0.19)   & -0.51 & 0.51   \\
modularity     & 0.38 (0.15)    & 0.14  & 0.85   \\ \hline
\end{tabular}
\label{tab:train_stat}
\caption{Statistics on synthetic networks used for training.}
\vspace{1em}
\end{table}

\subsubsection{Rewiring for Structural Diversification}
With a coin flip, we choose to label the nodes either i) randomly, or ii) in the order of their degree (nodes with the same degree are given consecutive integers as labels). The former choice will lead to the process of controlling the modularity of the network, and the latter choice will lead to the control of the network's degree-assortativity. This controlling of the modularity/degree-assortativity is achieved by the iterative application of degree-preserving edge rewirings. The target (label-)assortativity coefficient is then sampled uniformly from the set $\{0.05, 0.15, 0.2, 0.25, 0.3, 0.4, 0.5\}$ and a random (negative or positive) sign is applied to the coefficient. Starting with any network, we perform a randomly picked label-assortative (resp. label-disassortative rewiring) if the network's current label-assortativity is below (resp. above) the target coefficient. Double-edges rewirings, in the appropriate direction, are iteratively performed until the network’s assortativity (computed corresponding to the chosen node labeling) matches the target value within tolerance or the maximum number of rewiring attempts is reached. Note that the resulting rewired networks will always be connected.

\begin{algorithm}[t]
\caption{MIND Training}
\label{alg:algorithm}
\textbf{Initialization}: Diversified set of training networks $\mathcal{G}$; state-action value networks $Q_i$ and corresponding target networks $Q_{\text{targ}, i}$ ($i = 1, 2$); policy network $\pi$; empty replay buffer $\mathcal{D}$; random sampling steps $s_\text{start}$; target network update frequency $f_\text{targ}$; global step $s=0$;
\begin{algorithmic}[1] 
\STATE sample $G_0\sim P_{\mathcal{G}}$; initialize episode $t=0$;
\WHILE{$s<s_\text{total}$}
\IF{$s < s_\text{start}$}
\STATE randomly choose $v_t\in V_t$;
\ELSE
\STATE $v_t=\pi(G_t)$; 
\ENDIF
\STATE $G_{t+1} = G_t\setminus \{v_t\}$;
\STATE Store transition $(G_t, v_t, r_t, G_{t+1})$ in $\mathcal{D}$;
\STATE $t\mathrel{+}=1$; $s\mathrel{+}=1$;
\IF{$\mathrm{LCC}(G_t)<0.1|V_0|$}
    \STATE Sample $G_0\sim P_{\mathcal{G}}$; reset episode $t=0$;
\ENDIF
\IF {$s > s_\text{start}$}
\STATE Randomly sample $\mathcal{B}$ from $\mathcal{D}$;
\STATE Update $Q_i$, $i=1, 2$, by gradient descent with\textsuperscript{*}:
\begin{align*}
    \frac{1}{|\mathcal{B}|}\sum_\mathcal{B}(Q_i(G_t, v_t)-(r_t+\gamma\hat{Q}'))^2;
\end{align*}
\STATE Update $\pi$ by gradient ascent with
\begin{align*}
    \frac{1}{|\mathcal{B}|}\sum_\mathcal{B}\min_{i=1,2}Q_{i}(G_t, \pi(G_t))-\alpha\log(\pi(G_t)|G_t);
\end{align*}
\IF{$\mathrm{mod}(s, f_\text{targ})=0$}
\STATE Update target networks:
\begin{align*}
    Q_{\text{targ}, i}\leftarrow Q_i, \quad i=1,2;
\end{align*}
\ENDIF
\ENDIF
\ENDWHILE
\end{algorithmic}
{\footnotesize\hspace{0.5em}* Define ${\scriptstyle\hat{Q}' = \mathbb{E}_\pi\left[\min_{i=1,2}Q_{\text{targ},i}(G_{t+1}, v_{t+1}) - \alpha\log\pi(v_{t+1} | G_{t+1})\right]}$.}
\end{algorithm}
\vspace{-0.7em}

\section{MIND Training}
\label{app:setup}

MIND is trained on an Ubuntu 22.04 server with an Intel(R) Xeon(R) w9-3475X CPU (512 GB RAM) and an RTX 5000 Ada GPU (32 GB VRAM). On this machine, the complete training process took approximately 90 h, which is comparable to the training time reported for the RL-based baseline method FINDER. MIND requires substantially less training time than the other machine learning-based baseline, GDM, which relies on brute-force dismantling of training networks with factorial complexity in the number of nodes. The pseudo-code of the training process of MIND is presented in Algorithm~\ref{alg:algorithm}.

The same set of hyperparameters (listed in Table~\ref{tab:hyperparameters}), mostly aligned with the hyperparameters for discrete SAC in (Huang et al. 2022), is used across all experiments. The Adam optimizer is used for updating the neural networks. To estimate the state-action value, we apply a forgetting factor of $\lambda = 0.99$, resulting in the following:
\begin{align}
Q(G_t, v_i) = r_t + \mathbb{E}\!\left[\sum_{k=t+1}^{|V_0|-1} \gamma^{k-t} r_k\right],
\end{align}
which slightly discounts future AUC contributions but effectively prevents unbounded growth of the cumulative AUC for large networks. To stabilize the bootstrapping estimation of $Q(G_t, v_i) $, we update the target Q-networks intermittently (every 8000 steps). By setting the target Q-network update factor to 1, in the update event, the target networks are synchronized with the main Q-networks.
The number of message-passing iterations is set to $K=6$. Although increasing the number of message-passing iterations $K$ enlarges the receptive field of the GNN encoder in MIND, it leads to higher computational complexity and increased training difficulty. Also, most real-world networks exhibit small-world properties with small diameters, motivating our choice of a small $K$. MIND uses a relatively lightweight model with 120 k trainable parameters. This is approximately 100 times smaller than the best-performing baseline, GDM, with 13 M parameters. The final embedding dimension of MIND is 96 (after concatenation of MIND-MP), compared to 1500 in GDM.

\begin{table}[t]
    \centering
    \begin{tabular}{ll}
        \hline
        \textbf{Parameter} & \textbf{Value} \\
        \hline
        Total number of steps $s_\text{total}$ & $8\times 10^6$\\
        Replay buffer size $|\mathcal{D}|$ & $2\times 10^6$\\
        Learning rate for Q-network & $3 \times 10^{-4}$ \\
        Learning rate for policy network & $3 \times 10^{-4}$ \\
        Batch size for updating $|\mathcal{B}|$ & $512$ \\
        Start learning $s_\text{start}$ & $100000$ \\
        Target Q-network update factor & $1$ \\
        Forgetting factor $\gamma$ & $0.99$ \\
        Policy network update frequency & $4$ \\
        Target network update frequency & $8000$ \\
        Node embedding size $F$ & 4\\
        Number of heads $H$ & $4$\\
        Message-passing iterations $K$& $6$\\
        $\mathrm{MLP}_\sigma$ and $\mathrm{MLP}_\nu$ structure & $16, [32], 1$\\
        $\mathrm{MLP}_\theta$ and $\mathrm{MLP}_\phi$ $\text{structure}^*$ & $196, [256, 256], 1$\\
        \hline
        \multicolumn{2}{l}{$^*$ $\mathrm{MLP}_\zeta$ is merged into $\mathrm{MLP}_\theta$ and $\mathrm{MLP}_\phi$.}\\
        \hline
    \end{tabular}
    \caption{The hyperparameters of MIND. For Neural Networks, the first number is the input size, the numbers in $[\cdot]$ are the size of the hidden layers, and the last number is the output size.}
    \label{tab:hyperparameters}
\end{table}

\section{Statistics on the Real Networks}
\label{app:g_stat}

In Table~\ref{tab:graph_stats}, we present key statistics of the real-world networks used in the Experiments section. Specifically, we report the total number of nodes $|V|$, average degree, degree-assortativity coefficient (within the range $[-1, 1]$), and modularity (within the range $[-1, 1]$) of each network. The table shows that these networks span a wide range of sizes (from $128$ to $1.4$ million nodes) and average degrees (from $2$ to $33$). Many real networks are disassortative (or exhibit negative degree-assortativity), whereas certain networks---mostly technological networks---have very high modularity.

Since the message-passing operations in MIND are confined to local neighborhoods, its computational complexity scales linearly with the number of nodes, while allowing parallelization on modern GPUs. Such efficiency enables MIND to scale to huge real-world networks. For example, MIND requires only 6.8 hours to dismantle the largest real network, Hyves, which contains 1.5 million nodes, compared to 9.8 hours for GDM and 12.2 hours for FINDER.

\section{Detailed Dismantling Results on Real-World Networks}
\label{app:per_g_real}

In Table~\ref{tab:graph_performance}, we report numerical values of the relative AUC of the dismantling curve, given by every method on all of the real-world networks. The results shown in Fig~2 are generated from the data reported in Table~\ref{tab:graph_performance}. MIND's better performance over the state-of-the-art baselines is statistically significant, with $p$-values of $<0.008$ for the Wilcoxon test comparing MIND versus each baseline.

\section{Limitation and Future Works}
Despite the state-of-the-art performance, dismantling networks with huge diameters is the weakness of MIND. This is a common weakness of all common message-passing GNNs arising from their limited message-passing reach, and remains to be solved while respecting the linear computational complexity.
Another potential future work is the RL training framework. 
In this manuscript, we did not delve into the effects of the RL model on the dismantling performance. We based our choice of the RL algorithm on observing the performance of a number of recent methods in elementary tests. Due to its off-policy learning, SAC uses the observed network dismantling steps more efficiently in training. 
Exploiting the most recent advances in RL and/or tailoring RL models (e.g., leveraging domain knowledge to reduce the policy search space) for the purpose of network dismantling is a potential direction for future work. Last but not least, the application of the ideas presented in this manuscript can be investigated to derive GNN-based solutions to general complex network problems.

\begin{table*}[t]
    \centering
    \begin{tabular}{l c c c c}
        \hline
        \textbf{Network} & $|V|$ & \textbf{avg. deg.} & \textbf{assortativity} & \textbf{modularity} \\
        \hline
        \multicolumn{5}{c}{\textbf{biological}} \\
        \hline
        arenas-meta & 453 & 9.01 & -0.214 & 0.445 \\
        dimacs10-celegansneural & 297 & 14.46 & -0.163 & 0.384 \\
        foodweb-baydry & 128 & 32.91 & -0.104 & 0.178 \\
        foodweb-baywet & 128 & 32.42 & -0.112 & 0.180 \\
        maayan-Stelzl & 1706 & 3.74 & -0.187 & 0.618 \\
        maayan-figeys & 2239 & 5.75 & -0.331 & 0.465 \\
        maayan-foodweb & 183 & 26.80 & -0.254 & 0.364 \\
        maayan-vidal & 3133 & 4.29 & -0.097 & 0.678 \\
        moreno\_propro & 1870 & 2.44 & -0.152 & 0.847 \\
        \hline
        \multicolumn{5}{c}{\textbf{social}} \\
        \hline
        advogato & 6539 & 13.24 & -0.061 & 0.461 \\
        douban & 154908 & 4.22 & -0.180 & 0.598 \\
        ego-twitter & 23370 & 2.81 & -0.478 & 0.895 \\
        hyves & 1402673 & 3.96 & -0.023 & 0.771 \\
        librec-ciaodvd-trust & 4658 & 14.22 & 0.104 & 0.434 \\
        librec-filmtrust-trust & 874 & 3.00 & 0.078 & 0.754 \\
        loc-brightkite & 58228 & 7.35 & 0.011 & 0.679 \\
        loc-gowalla & 196591 & 9.67 & -0.029 & 0.713 \\
        munmun\_digg\_reply\_LCC & 29652 & 5.72 & 0.003 & 0.406 \\
        munmun\_twitter\_social & 465017 & 3.58 & -0.878 & 0.649 \\
        opsahl-ucsocial & 1899 & 14.57 & -0.188 & 0.262 \\
        pajek-erdos & 6927 & 3.42 & -0.116 & 0.696 \\
        petster-hamster & 2426 & 13.71 & 0.047 & 0.549 \\
        slashdot-threads & 51083 & 4.60 & -0.034 & 0.483 \\
        slashdot-zoo & 79116 & 11.82 & -0.075 & 0.341 \\
        soc-Epinions1 & 75879 & 10.69 & -0.041 & 0.443 \\
        \hline
        \multicolumn{5}{c}{\textbf{information}} \\
        \hline
        cit-HepPh & 34546 & 24.37 & -0.006 & 0.725 \\
        citeseer & 384413 & 9.07 & -0.061 & 0.800 \\
        com-amazon & 334863 & 5.53 & -0.059 & 0.926 \\
        com-dblp & 317080 & 6.62 & 0.267 & 0.820 \\
        dblp-cite & 12591 & 7.88 & -0.046 & 0.633 \\
        dimacs10-polblogs & 1224 & 27.31 & -0.221 & 0.427 \\
        econ-wm1 & 260 & 19.65 & 0.032 & 0.268 \\
        linux & 30837 & 13.86 & -0.174 & 0.480 \\
        p2p-Gnutella06 & 8717 & 7.23 & 0.052 & 0.388 \\
        p2p-Gnutella31 & 62586 & 4.73 & -0.093 & 0.502 \\
        subelj\_jdk\_jdk & 6434 & 16.68 & -0.223 & 0.494 \\
        subelj\_jung-j\_jung-j & 6120 & 16.43 & -0.233 & 0.471 \\
        web-EPA & 4271 & 4.17 & -0.303 & 0.647 \\
        web-Stanford & 281903 & 14.14 & -0.112 & 0.927 \\
        \hline
        \multicolumn{5}{c}{\textbf{technological}} \\
        \hline
        eu-powergrid & 1467 & 2.48 & -0.064 & 0.926 \\
        gridkit-eupowergrid & 13844 & 2.50 & 0.014 & 0.966 \\
        gridkit-north\_america & 16167 & 2.50 & 0.050 & 0.968 \\
        internet-topology & 34761 & 6.20 & -0.215 & 0.610 \\
        london\_transport\_multiplex\_aggr & 369 & 2.33 & 0.137 & 0.829 \\
        opsahl-openflights & 2939 & 10.67 & 0.051 & 0.635 \\
        roads-california & 21048 & 2.06 & -0.002 & 0.975 \\
        roads-sanfrancisco & 174956 & 2.54 & 0.083 & 0.986 \\
        tech-RL-caida & 190914 & 6.37 & 0.025 & 0.856 \\
        \hline
    \end{tabular}
    \caption{Statistics on the real-world networks.}
    \label{tab:graph_stats}
\end{table*}

\begin{table*}[t]
    \centering
    \begin{tabular}{l | c c c c c c c c | c}
        \hline
        \textbf{Network} & AD & BC & PR & MS & EI & GND & FINDER & GDM & MIND \\
        \hline
        \multicolumn{10}{c}{\textbf{biological}} \\
        \hline
        arenas-met & 100.6 & 138.3 & 111.5 & 117.0 & 117.0 & 125.3 & 107.3 & 97.1 & 100.0 \\
        dimacs10-c & 117.3 & 142.5 & 132.5 & 133.8 & 120.3 & 96.7 & 115.5 & 113.8 & 100.0 \\
        foodweb-bd & 108.4 & 120.9 & 133.2 & 115.4 & 113.8 & 115.5 & 107.7 & 110.6 & 100.0 \\
        foodweb-bw & 107.4 & 121.7 & 133.3 & 115.2 & 112.0 & 118.9 & 106.4 & 107.8 & 100.0 \\
        maayan-Ste & 104.9 & 140.3 & 116.1 & 114.3 & 116.9 & 147.6 & 103.2 & 102.4 & 100.0 \\
        maayan-fig & 103.7 & 157.9 & 131.0 & 133.2 & 160.5 & 107.4 & 102.2 & 103.0 & 100.0 \\
        maayan-foo & 120.2 & 132.2 & 124.5 & 162.0 & 154.9 & 116.8 & 104.5 & 104.8 & 100.0 \\
        maayan-vid & 107.2 & 128.6 & 113.2 & 118.4 & 105.4 & 115.2 & 101.0 & 103.7 & 100.0 \\
        moreno\_pro & 124.1 & 157.6 & 135.5 & 156.0 & 94.4 & 124.4 & 114.6 & 107.4 & 100.0 \\
        \hline
        \multicolumn{10}{c}{\textbf{social}} \\
        \hline
          advogato & 107.6 & 121.1 & 158.5 & 119.0 & 117.8 & 113.9 & 103.9 & 105.6 & 100.0 \\
            douban & 108.0 & 117.4 & 102.3 & 126.9 & 115.4 & 115.6 & 95.0 & 95.7 & 100.0 \\
        ego-twitte & 110.3 & 135.0 & 109.1 & 169.4 & 104.3 & 118.3 & 112.2 & 101.2 & 100.0 \\
             hyves & 111.5 & 159.6 & 104.2 & 135.3 & 169.1 & 110.7 & 102.6 & 100.5 & 100.0 \\
        librec-cia & 132.4 & 129.3 & 135.6 & 142.4 & 145.6 & 127.3 & 116.1 & 112.5 & 100.0 \\
        librec-fil & 122.8 & 166.0 & 137.6 & 176.1 & 117.8 & 113.7 & 125.2 & 109.8 & 100.0 \\
        loc-bright & 112.4 & 139.1 & 118.8 & 121.7 & 111.4 & 114.2 & 109.7 & 113.9 & 100.0 \\
        loc-gowall & 106.5 & 148.8 & 114.8 & 112.3 & 101.5 & 110.0 & 105.5 & 106.5 & 100.0 \\
        munmun\_dig & 104.7 & 124.9 & 113.1 & 105.1 & 116.2 & 123.5 & 103.3 & 106.2 & 100.0 \\
        munmun\_twi & 101.3 & 103.3 & 90.7 & 138.4 & 124.9 & 103.6 & 102.9 & 98.5 & 100.0 \\
        opsahl-ucs & 107.7 & 118.4 & 114.2 & 117.3 & 127.8 & 131.6 & 105.0 & 107.8 & 100.0 \\
        pajek-erdo & 103.8 & 107.1 & 103.7 & 123.3 & 120.3 & 112.7 & 102.5 & 100.4 & 100.0 \\
        petster-ha & 128.4 & 129.6 & 141.4 & 174.5 & 108.7 & 96.8 & 129.2 & 104.6 & 100.0 \\
        slashdot-t & 103.6 & 118.9 & 108.8 & 122.5 & 127.5 & 104.2 & 101.8 & 104.1 & 100.0 \\
        slashdot-z & 103.8 & 134.6 & 112.1 & 121.9 & 131.2 & 107.6 & 101.9 & 108.4 & 100.0 \\
        soc-Epinio & 100.7 & 120.3 & 104.1 & 117.4 & 115.6 & 99.6 & 97.0 & 99.8 & 100.0 \\
        \hline
        \multicolumn{10}{c}{\textbf{information}} \\
        \hline
         cit-HepPh & 121.5 & 138.3 & 133.7 & 125.3 & 115.6 & 90.5 & 120.1 & 121.0 & 100.0 \\
          citeseer & 109.3 & 165.7 & 128.0 & 118.9 & 106.0 & 109.2 & 107.0 & 106.9 & 100.0 \\
        com-amazon & 123.1 & 172.1 & 122.6 & 152.8 & 75.5 & nan & 122.2 & 107.0 & 100.0 \\
          com-dblp & 114.3 & 121.0 & 109.5 & 180.0 & 88.5 & 106.9 & 115.6 & 97.5 & 100.0 \\
         dblp-cite & 124.8 & 130.8 & 126.1 & 153.7 & 122.8 & 124.5 & 117.0 & 110.0 & 100.0 \\
        dimacs10-p & 111.5 & 125.4 & 122.9 & 117.5 & 126.2 & 117.5 & 107.8 & 109.3 & 100.0 \\
          econ-wm1 & 103.4 & 125.0 & 123.4 & 101.5 & 120.6 & 122.8 & 92.4 & 94.2 & 100.0 \\
             linux & 128.6 & 404.0 & 194.7 & 166.0 & 93.5 & 108.3 & 124.2 & 110.7 & 100.0 \\
        p2p-Gnutel & 115.5 & 126.0 & 118.8 & 115.8 & 115.5 & 136.8 & 112.2 & 106.4 & 100.0 \\
        p2p-Gnutel & 110.7 & 131.5 & 112.0 & 110.8 & 114.3 & 135.6 & 107.4 & 101.5 & 100.0 \\
        subelj\_jdk & 112.6 & 339.3 & 141.0 & 141.4 & 110.6 & 105.2 & 102.2 & 97.8 & 100.0 \\
        subelj\_jun & 124.1 & 340.3 & 153.2 & 145.5 & 120.4 & 103.8 & 109.5 & 101.7 & 100.0 \\
           web-EPA & 109.3 & 147.3 & 112.2 & 169.0 & 150.9 & 158.6 & 106.7 & 106.9 & 100.0 \\
        web-Stanfo & 209.8 & 239.9 & 173.3 & nan & 65.1 & 114.8 & 207.2 & 98.9 & 100.0 \\
        \hline
        \multicolumn{10}{c}{\textbf{technological}} \\
        \hline
        eu-powergr & 151.5 & 190.5 & 196.5 & 317.5 & 80.6 & 82.8 & 161.7 & 109.1 & 100.0 \\
        gridkit-eu & 136.0 & 175.8 & 132.2 & 256.1 & 36.4 & 64.0 & 132.2 & 97.6 & 100.0 \\
        gridkit-no & 124.7 & 149.4 & 133.5 & 184.1 & 26.3 & 56.4 & 137.0 & 95.0 & 100.0 \\
        internet-t & 102.0 & 126.5 & 112.4 & 142.6 & 117.2 & 98.4 & 102.2 & 102.9 & 100.0 \\
        london\_tra & 129.2 & 152.9 & 134.1 & 124.3 & 112.0 & 93.5 & 129.0 & 105.4 & 100.0 \\
        opsahl-ope & 135.2 & 131.3 & 130.9 & 167.1 & 116.8 & 107.6 & 120.5 & 106.2 & 100.0 \\
        roads-cali & 192.8 & 882.0 & 125.1 & 114.9 & 23.4 & 78.0 & 116.3 & 92.2 & 100.0 \\
        roads-sanf & 184.5 & 51.6 & 258.8 & 160.4 & 15.5 & 30.0 & 176.4 & 92.0 & 100.0 \\
        tech-RL-ca & 120.7 & 172.6 & 121.3 & 157.3 & 78.5 & 112.1 & 116.2 & 106.9 & 100.0 \\
        \hline
        \textbf{Overall} & \textbf{119.9} & \textbf{167.8} & \textbf{129.5} & \textbf{142.8} & \textbf{108.0} & \textbf{109.1} & \textbf{115.0} & \textbf{104.2} & \textbf{100.0} \\
        \hline
    \end{tabular}
    \caption{Detailed dismantling results on real-world networks.}
    \label{tab:graph_performance}
\end{table*}

\end{document}